\newtheorem*{rep@theorem}{\rep@title}
\newcommand{\newreptheorem}[2]{%
\newenvironment{rep#1}[1]{%
 \def\rep@title{#2 \ref{##1}}%
 \begin{rep@theorem}}%
 {\end{rep@theorem}}}
\newtheorem{theorem}{Theorem}
\title{Online Learning Algorithms for \\Statistical Arbitrage}
\author{Christopher Mohri}
\authorrunning{Christopher Mohri}
\institute{Hunter College High School, \\
New York, NY 10128,\\
\email{christopher.mohri@gmail.com}}
\begin{document}

\maketitle

\section{Introduction} 
\label{sec:arbitrage}

Arbitrage is the risk-free method of making profit from exploiting
price differences in different markets. For example, if one stock is
trading at a higher price in one market than another, one could buy
the stock for the lower price on one market and sell it for the higher
price on the other, thereby making profit without taking risks. These
pricing disparities have become increasingly hard to capitalize on
as they only appear for very short periods of time with the
advancements in technology and high-frequency trading. Only those who
can recognize and take advantage of arbitrage opportunities first can
benefit, turning it into a winner-takes-all situation. This has made
it difficult to make consistent profit from price discrepancies, as
one needs to recognize them quickly and be the first to leverage
them. Yet, arbitrage is a necessary tool in the marketplace as it
quickly eliminates market inefficiencies and keeps prices uniform
across markets
\citep{Avellaneda2011,DamghaniKos2013,Lo2010,FernholzMaguire2007,
  AvellanedaLee2008, Wu2007}.

One type of arbitrage is taking advantage of the difference of the
cost of an ETF against the summation of the prices of the stocks in
the underlying basket. If the cost of an ETF exceeds the cost of the
underlying basket of stocks, then one can buy the individual stocks
for the prices on the market, and sell the ETF in order to make a
profit.

Another type of arbitrage is through taking advantage of discrepancies
within currency conversions, such as in a currency triangle. For
example, if one converts a certain currency to another, and then again
to another, and then back to the original currency, then the resulting
balance does not necessarily equal the initial balance if there exists
a discrepancy. Taking advantage of such a discrepancy results in
arbitrage.

Another key type of arbitrage is known as \emph{statistical
  arbitrage}. Statistical arbitrage relies on historical data and
statistics to determine relatively risk-free strategies, although of
course this may not always be exactly the case. One simple example of
statistical arbitrage is pairs trading. Pairs trading consists of
identifying correlations between two or more stocks. When stocks have
historically appeared to be strongly correlated, then we can assume
that they ultimately will converge when they currently seem to
diverge. For example, if stock A and stock B are correlated (they may
even be in the same field, such as Coca Cola and Pepsi), and the price
of stock A increases while that of stock B remains the same, then we
can expect that the two will converge again. In this case, the optimal
decision to make is to sell stock A and buy stock B.

This paper deals with algorithms for statistical arbitrage. We present
novel techniques for this problem based on online learning.  In
section~\ref{sec:formulation}, we discuss a new formulation of the
problem of statistical arbitrage. We present online learning
algorithms for statistical arbitrage in Section~\ref{sec:algorithm}.

\section{Problem Formulation}
\label{sec:formulation}

Standard statistical arbitrage algorithms are based on statistic
deviations from the mean. However, for non-stationary distributions,
which are the typical stochastic processes we observe in the stock
market, estimating the mean or the standard deviation is a difficult
problem. The problem of learning with non-stationary distributions is
an active area of research \citep{KuznetsovMohri2015}. Thus, at the
heart of these algorithms, there is a key problem, that of
non-stationarity. One way to deal with such problems is to use the
notion of \emph{discrepancy} and to design sophisticated time series
algorithms such as those proposed by
\cite{KuznetsovMohri2015}. Alternatively, we can formulate
the problem as an instance of online learning which makes no
assumption about the distribution. Here, we adopt the latter
formulation since it admits the advantage of simplicity while
admitting strong regret-based learning guarantees.

\section{Algorithm}
\label{sec:algorithm}

\subsection{Online learning scenario}

The standard scenario of online learning can be described as follows:
at each round $t \in [1, T]$ the learner receives a point $x_t$ out of
an input space $X$ and makes a prediction $\hat y_t \in Y$ for the
label of $x_t$ out of the output space $Y$. He receives the true label
$y_t$ and incurs a loss $L(\hat y_t, y_t)$, where
$L\colon Y \times Y \to \Rset$ is the loss function associated with
the problem. No stochastic assumption is made about the input point
$x_t$ or its label $y_t$. The objective of the learner is to minimize
its \emph{regret}, that is the difference between its cumulative loss
over $T$ rounds and the loss of the best expert in hindsight.

We want to use online learning to determine when and how much to buy
and sell stocks when doing statistical arbitrage. We are going to use
the \emph{randomized weighted-majority algorithm}
\citep{LittlestoneWarmuth94} to help us make decisions in the stock
market because it admits a very favorable regret guarantee.

\subsection{Randomized weighted majority algorithm}

\subsubsection{Description}

In the randomized scenario of on-line learning, we assume that a set
$\sA = \set{1, \ldots, N}$ of $N$ actions or experts is available.  At
each round $t \in [T]$, an on-line algorithm $\Alg$ selects a
distribution $\p_t$ over the set of actions, receives a loss vector
$\mat{l}_t$, whose $i$th component $l_{t, i} \in [0, 1]$ is the loss
associated with action $i$, and incurs the expected loss
$L_t = \sum_{i = 1}^N p_{t, i} \, l_{t, i}$. The total loss incurred
by the algorithm over $T$ rounds is $\cL_T = \sum_{t = 1}^T L_t$. The
total loss associated to action $i$ is
$\cL_{T, i} = \sum_{t = 1}^T l_{t, i}$.  The minimal loss of a single
action is denoted by $\cL_T^{\min} = \min_{i \in \sA} \cL_{T, i}$. The
regret $R_T(\Alg)$ of the algorithm $]\Alg$ after $T$ rounds is then
typically defined by the difference of the loss of the algorithm and
that of the best single action:\footnote{Alternative definitions of
  the regret\index{regret} with comparison classes different from the
  set of single actions can be considered.}
\begin{equation*}
R_T = \cL_T - \cL_T^{\min}.
\end{equation*}
For this presentation of the algorithm, we consider specifically the
case of zero-one losses and assume that $l_{t, i} \in \set{0, 1}$ for
all $t \in [T]$ and $i \in \sA$.

The Randomized Weighted-Majority algorithm
\citep{LittlestoneWarmuth94} works as follows. The algorithm maintains
a distribution $p_{t}$ over a set of $N$ experts.  The original
distribution $p_1$ is initialized to be the uniform distribution.  At
each round, the loss assigned to each expert is revealed. The
algorithm incurs the expected loss over the experts, that is
$\sum_{i = 1}^N p_{t, i} \, l_{t, i}$, and then updates its
distribution on the set of experts by multiplying weight $p_{t, i}$ of
expert $i$ by $\beta \in (0, 1)$ when the expert committed a mistake
and leaving it unchanged otherwise. Next, the resulting probability
$p_{t + 1}$ is obtained after normalization. Figure~\ref{fig:RWM}
gives the pseudocode of the algorithm.

\begin{figure}[t]
\begin{ALGO}{Randomized-Weighted-Majority
\index{Randomized-Weighted-Majority algorithm}}{N} 
\DOFOR{i \EQ 1 \TO N}
\SET{w_{1, i}}{1}
\SET{p_{1, i}}{1/N}
\OD
\DOFOR{t \EQ 1 \TO T}
\CALL{Receive}{\bl_t}
\DOFOR{i \EQ 1 \TO N}
\IF{(l_{t, i} = 1)}
\SET{w_{t + 1, i}}{\beta w_{t, i}}
\ELSE
\SET{w_{t + 1, i}}{w_{t, i}}
\FI
\OD
\SET{W_{t + 1}}{\sum_{i = 1}^N w_{t + 1, i}}
\DOFOR{i \EQ 1 \TO N}
\SET{p_{t + 1, i}}{w_{t + 1, i}/W_{t + 1}}
\OD
\OD
\RETURN{\w_{T + 1}}
\end{ALGO}
\caption{Randomized weighted majority algorithm\index{Randomized-Weighted-Majority algorithm}\index{algorithm!randomized}.}
\label{fig:RWM}
\end{figure}

Its objective is to minimize its expected regret, that is the
difference between its cumulative loss and that of the best expert in
hindsight.

\subsubsection{Regret Guarantees}

The following theorem gives a strong guarantee on the
regret\index{regret} $R_T$ of the RWM
algorithm\index{Randomized-Weighted-Majority algorithm}, showing that
it is in $O(\sqrt{T \log N})$. The proof and the presentation
are based on \citep{MohriRostamizadehTalwalkar2012}.

\begin{theorem}
\label{th:randomized_weighted_majority}
Fix $\beta \in [1/2, 1)$. Then, for any $T \geq 1$, the loss of
algorithm RWM\index{Randomized-Weighted-Majority algorithm} on any sequence can be bounded as follows:
\begin{equation}
\cL_T \leq \frac{\log N}{1 - \beta} + (2 - \beta) \cL_T^{\min}.
\end{equation}
In particular, for $\beta = \max \set{1/2, 1 - \sqrt{(\log N)/T}}$, the loss can be bounded as:
\begin{equation}
\label{eq:RWM_regret}
\cL_T \leq \cL_T^{\min} + 2 \sqrt{T \log N}.
\end{equation}
\end{theorem}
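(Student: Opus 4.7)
The plan is to use a potential-function argument with $W_t = \sum_{i=1}^N w_{t,i}$, bounding $W_{T+1}$ from above in terms of the algorithm's expected loss $\cL_T$ and from below in terms of the best expert's loss $\cL_T^{\min}$, then combining the two.

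For the upper bound, I will track how $W_t$ evolves in one round. Since weights of mistaken experts are multiplied by $\beta$ and others are left unchanged, and since $p_{t,i} = w_{t,i}/W_t$, one gets
\begin{equation*}
W_{t+1} = \sum_{i : l_{t,i}=0} w_{t,i} + \beta \sum_{i : l_{t,i}=1} w_{t,i} = W_t\bigl(1 - (1-\beta) L_t\bigr),
\end{equation*}
where $L_t = \sum_i p_{t,i} l_{t,i}$ is the algorithm's expected loss at round $t$. Iterating from $W_1 = N$ gives $W_{T+1} = N \prod_{t=1}^T (1 - (1-\beta) L_t)$. For the lower bound, observe that for every expert $i$ we have $w_{T+1,i} = \beta^{\cL_{T,i}}$, and in particular $W_{T+1} \geq \beta^{\cL_T^{\min}}$.

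Combining these two bounds and taking logarithms yields
\begin{equation*}
\cL_T^{\min} \log \beta \leq \log N + \sum_{t=1}^T \log\bigl(1 - (1-\beta) L_t\bigr) \leq \log N - (1-\beta) \cL_T,
\end{equation*}
using $\log(1-x) \leq -x$. Rearranging, $(1-\beta)\cL_T \leq \log N + \cL_T^{\min}(-\log \beta)$. The key calculus step, and the one I expect to be the technically most delicate, is bounding $-\log \beta$: for $\beta \in [1/2, 1)$ one has $1-\beta \leq 1/2$, so applying the inequality $-\log(1-x) \leq x + x^2$ valid on $[0,1/2]$ with $x = 1-\beta$ gives $-\log \beta \leq (1-\beta) + (1-\beta)^2$. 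Dividing through by $1-\beta$ delivers the first inequality
\begin{equation*}
\cL_T \leq \frac{\log N}{1-\beta} + (2-\beta)\,\cL_T^{\min}.
\end{equation*}

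For the second inequality, I will rewrite this as $\cL_T \leq \cL_T^{\min} + \frac{\log N}{1-\beta} + (1-\beta)\cL_T^{\min}$ and use the trivial bound $\cL_T^{\min} \leq T$, giving $\cL_T \leq \cL_T^{\min} + \frac{\log N}{1-\beta} + (1-\beta) T$. Optimizing the right-hand side over $1-\beta \in (0,1/2]$ yields the choice $1-\beta = \sqrt{(\log N)/T}$ whenever this quantity is at most $1/2$, and otherwise the clipping at $1/2$ ensures $\beta \in [1/2,1)$; in either case the stated bound $\cL_T \leq \cL_T^{\min} + 2\sqrt{T \log N}$ follows (in the clipped regime $T \leq 4\log N$, and the bound is immediate since $\cL_T \leq T \leq 2\sqrt{T \log N}$). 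The main obstacle is really just the log inequality and keeping track of the regime $\beta \geq 1/2$; everything else is potential-function bookkeeping.
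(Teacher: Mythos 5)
Your proposal is correct and follows essentially the same route as the paper's proof: the same potential function $W_t = \sum_i w_{t,i}$, the same upper and lower bounds on $W_{T+1}$, the same two logarithmic inequalities, and the same optimization of $\beta$ with clipping at $1/2$. The only (welcome) addition is that you spell out why the clipped regime $\beta_0 = 1/2$ still yields the bound via $\cL_T \leq T \leq 2\sqrt{T \log N}$, a detail the paper leaves implicit.
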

\begin{proof}
  As in many proofs for deriving regret guarantees in on-line
  learning, we derive upper and lower bounds for the potential
  function\index{potential function} $W_t = \sum_{i = 1}^N w_{t, i}$,
  $t \in [T]$, and combine these bounds to obtain the result. By
  definition of the algorithm, for any $t \in [T]$, $W_{t + 1}$ can be
  expressed as follows in terms of $W_t$:
\begin{align*}
W_{t + 1} 
= \sum_{i \colon l_{t, i} = 0} w_{t, i} + \beta \sum_{i \colon
  l_{t.i} = 1} w_{t, i}
& = W_t + (\beta - 1) \sum_{i \colon l_{t, i} = 1} w_{t, i}\\
& = W_t + (\beta - 1) W_t \sum_{i \colon l_{t, i} = 1} p_{t, i}\\
& = W_t + (\beta - 1) W_t L_t\\
& = W_t (1 - (1 - \beta) L_t).
\end{align*}
Thus, since $W_1 = N$, it follows that $W_{T + 1} = N \prod_{t = 1}^T
(1 - (1 - \beta) L_t)$. On the other hand, the following lower bound
clearly holds: $W_{T + 1} \geq \max_{i \in [N]} w_{T + 1, i} =
\beta^{\cL_T^{\min}}$. This leads to the following inequality and
series of derivations after taking the $\log$ and using the
inequalities $\log (1 - x) \leq - x$ valid for all $x < 1$, and $-\log
(1 - x) \leq x + x^2$ valid for all $x \in [0, 1/2]$:
\begin{align*}
\beta^{\cL_T^{\min}} \leq N \prod_{t = 1}^T (1 - (1 - \beta) L_t)
\implies & \cL_T^{\min} \log \beta \leq \log N + \sum_{t = 1}^T \log
(1 - (1 - \beta) L_t) \\
\implies & \cL_T^{\min} \log \beta \leq \log N - (1 - \beta) \sum_{t =
  1}^T L_t \\
\implies & \cL_T^{\min} \log \beta \leq \log N - (1 - \beta) \cL_T\\
\implies & \cL_T  \leq \frac{\log N}{1 - \beta} - \frac{\log \beta}{1 -
\beta} \cL_T^{\min}\\
\implies & \cL_T  \leq \frac{\log N}{1 - \beta}  - \frac{\log (1 - (1 - \beta))}{1 -
\beta} \cL_T^{\min}\\
\implies & \cL_T  \leq \frac{\log N}{1 - \beta} + (2 - \beta) \cL_T^{\min}.
\end{align*}
This shows the first statement. Since $\cL_T^{\min} \leq T$, this also implies
\begin{equation}
\label{eq:L_T_bound}
\cL_T  \leq \frac{\log N}{1 - \beta} + (1 - \beta) T + \cL_T^{\min}.
\end{equation}
Differentiating the upper bound with respect to $\beta$ and setting it
to zero gives $\frac{\log N}{(1 - \beta)^2} - T = 0$, that is $\beta =
1 - \sqrt{(\log N)/T} < 1$. Thus, if $1 - \sqrt{(\log N)/T} \geq 1/2$,
$\beta_0 = 1 - \sqrt{(\log N)/T}$ is the minimizing value of $\beta$,
otherwise the boundary value $\beta_0 = 1/2$ is the optimal value. The
second statement follows by replacing $\beta$ with $\beta_0$ in
\eqref{eq:L_T_bound}. \qed
\end{proof}
The bound \eqref{eq:RWM_regret} assumes that the algorithm
additionally receives as a parameter the number of rounds $T$.  As we
shall see in the next section, however, there exists a general
\emph{doubling trick}\index{doubling trick} that can be used to relax this requirement at
the price of a small constant factor increase. Inequality~\ref{eq:RWM_regret}
can be written directly in terms of the regret\index{regret} $R_T$ of the RWM
algorithm\index{Randomized-Weighted-Majority algorithm}:
\begin{equation}
R_T \leq 2 \sqrt{T \log N}.
\end{equation}
Thus, for $N$ constant, the regret\index{regret} verifies
$R_T = O(\sqrt{T})$ and the \emph{average
  regret}\index{regret!average} or \emph{regret per
  round}\index{regret!per round} $R_T/T$ decreases as
$O(1/\sqrt{T})$. These results are known to be optimal.

\subsection{Online statistical arbitrage}

To use on-line learning algorithms to tackle the problem of statistic
arbitrage, we need to specify first the groups of correlated stocks on
which to do statistical arbitrage, since we intend to capitalize on
deviations the mean. The problem of determining such groups, pairs
in the simplest case, is a non-trivial task requiring itself a study
of the correlations between stocks over a large period of time. In the
following, we will assume that a group of such stocks is already
provided to us by specialists of the domain.

Second, we need to specify the set of experts used by our online
algorithm. At a high level, the regret analysis of the previous
section suggests choosing a relatively large set of experts ($N$)
since with the hope that the best expert among them would achieve a
small loss in hindsight. Otherwise, our benchmark would be poor and
our guarantee rather weak. While a very large $N$ affects the regret
guarantee of the RWM algorithm, that effect is rather mild is the
dependency on $N$ is only logarithmic.

How should the experts be chosen? One general way of selecting
experts is to let existing statistical arbitrage algorithms
serve as experts. The regret guarantees then ensure that for
large enough $T$, the loss of the online algorithm per round
would be very close to that of the best algorithm in hindsight,
since $\sqrt{T \log N}/T = \sqrt{\log N/T} \ll 1$ for large
$T$.

A widely used class of such algorithms act when the price of a stock
crosses a certain factor of the standard deviation from the mean
\citep{AvellanedaLee2008,Avellaneda2011}. For
example, if the price of a stock increases to a certain positive value
of $\gamma$, an expert would suggest selling, while the same expert
would suggest buying if the price went below the negative value of
$\gamma$. More specifically, these algorithms
are based on the \emph{$s$-score} for each stock which is defined
at time $t$ as follows for stock $i$ \citep{Avellaneda2011}:
\begin{equation}
s_i = \frac{X_i(t) - \mu}{\sigma_{i}},
\end{equation}
where $X_i(t)$ denotes the value of the stock at time $t$, $\mu$
the mean value, and $\sigma_{i}$ the standard deviation. These
algorithms then proceed according to the following rules \citep{Avellaneda2011}:
\begin{equation}
\begin{cases}
\text{Open long position} & \text{if } s_i <-1.25\\
\text{Open short position} & \text{if } s_i >+1.25\\
\text{Close long position} & \text{if } s_i > -0.50\\
\text{Close short position} & \text{if } s_i < +0.50.
\end{cases}
\end{equation}
Thus, these rules determine when to act in statistical arbitrage for
different groups of stocks as well as in different time periods.

\begin{figure}[t]
  \centering
  \includegraphics[scale=0.25]{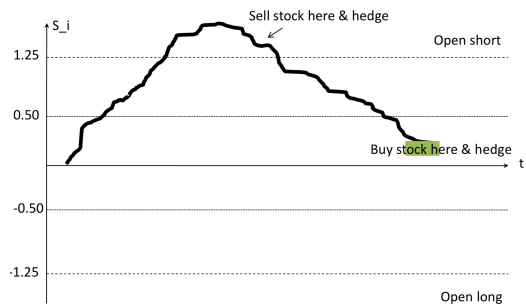}
  \caption{Illustration of some standard statistical arbitrage algorithms.}
\label{fig:xp}
\end{figure}

Instead of the specific values $.5$ and $1.25$ in these rules, we will
consider a family of algorithms parameterized by two positive real
values $\gamma_1$ and $\gamma_2$ with $\gamma_1 < \gamma_2$. For
$\gamma_1 = .5$ and $\gamma_2 = 1.25$, the algorithm coincides
with the one described above. As suggested by the discussion about
the size $N$ of the experts pool, we suggest considering a relatively 
large set $N$ of pairs $(\gamma_1, \gamma_2)$ and therefore experts.

\section{Conclusion}

We presented a general solution for statistical arbitrage based online
learning algorithms and concepts. We gave a full algorithmic and
theoretical analysis of the problem within this framework.  A key
advantage of our algorithm is that it does not require strong
stationary assumptions about the stochastic process, which often do
not hold in practice.  We hope to present preliminary experimental
results in support of our algorithms in the future and plan to discuss
a number of enhancements, including leveraging online algorithms
competing against sequence of experts, which allow for non-static
experts to serve as benchmarks in the definition of the regret
\citep{HerbsterWarmuth1998,HazanSeshadhri2009,AdamskiyKoolenChernovVovk2012\ignore{,MohriYang2018}}.

\section*{Acknowledgments}

I warmly thank Harr Chen and his team at Vatic Labs who gave me an
excellent general introduction to computational finance and in
particular familiarized me with the interesting problem of statistical
arbitrage.
\cite{*}
\newpage

\bibliographystyle{icml2014} 
\bibliography{arbb}

\begin{thebibliography}{17}
\providecommand{\natexlab}[1]{#1}
\providecommand{\url}[1]{\texttt{#1}}
\expandafter\ifx\csname urlstyle\endcsname\relax
  \providecommand{\doi}[1]{doi: #1}\else
  \providecommand{\doi}{doi: \begingroup \urlstyle{rm}\Url}\fi

\bibitem[Adamskiy et~al.(2012)Adamskiy, Koolen, Chernov, and
  Vovk]{AdamskiyKoolenChernovVovk2012}
Adamskiy, Dmitry, Koolen, Wouter~M, Chernov, Alexey, and Vovk, Vladimir.
\newblock A closer look at adaptive regret.
\newblock In \emph{ALT}, pp.\  290--304, 2012.

\bibitem[Avellaneda(2011)]{Avellaneda2011}
Avellaneda, Marco.
\newblock Risk and portfolio management: Statisical arbitrage, 2011.
\newblock URL
  \url{https://www.math.nyu.edu/faculty/avellane/Lecture8Risk2011.pdf}.

\bibitem[Avellaneda \& Lee(2008)Avellaneda and Lee]{AvellanedaLee2008}
Avellaneda, Marco and Lee, Jeong-Hyun.
\newblock Statistical arbitrage in the u.s. equities market.
\newblock Technical report, Courant Institute of Mathematical Sciences, 2008.

\bibitem[Beck \& Tetruashvili(2013)Beck and Tetruashvili]{BeckTetruashvili2013}
Beck, Amir and Tetruashvili, Luba.
\newblock On the convergence of block coordinate descent type methods.
\newblock \emph{SIAM Journal on Optimization}, 23\penalty0 (4):\penalty0
  2037--2060, 2013.

\bibitem[Damghani \& Kos(2013)Damghani and Kos]{DamghaniKos2013}
Damghani, Babak~Mahdavi and Kos, Andrew.
\newblock De-arbitraging with a weak smile: Application to skew risks.
\newblock \emph{Wilmott Magazine}, pp.\  40--49, 2013.

\bibitem[Fernholz \& Cary~Maguire(2007)Fernholz and
  Cary~Maguire]{FernholzMaguire2007}
Fernholz, Robert and Cary~Maguire, Jr.
\newblock The statistics of statistical arbitrage.
\newblock \emph{Financial Analysts Journal}, pp.\  46--52, 2007.

\bibitem[Hazan \& Seshadhri(2009)Hazan and Seshadhri]{HazanSeshadhri2009}
Hazan, Elad and Seshadhri, Comandur.
\newblock Efficient learning algorithms for changing environments.
\newblock In \emph{Proceedings of ICML}, pp.\  393--400. ACM, 2009.

\bibitem[Herbster \& Warmuth(1998)Herbster and Warmuth]{HerbsterWarmuth1998}
Herbster, Mark and Warmuth, Manfred~K.
\newblock Tracking the best expert.
\newblock \emph{Machine Learning}, 32\penalty0 (2):\penalty0 151--178, 1998.

\bibitem[Kuznetsov \& Mohri(2015)Kuznetsov and Mohri]{KuznetsovMohri2015}
Kuznetsov, Vitaly and Mohri, Mehryar.
\newblock Learning theory and algorithms for forecasting non-stationary time
  series.
\newblock In \emph{NIPS}, 2015.

\bibitem[Littlestone \& Warmuth(1994)Littlestone and
  Warmuth]{LittlestoneWarmuth94}
Littlestone, Nick and Warmuth, Manfred~K.
\newblock The weighted majority algorithm.
\newblock \emph{Information and Computation}, 108\penalty0 (2):\penalty0
  212--261, 1994.

\bibitem[Lo(2010)]{Lo2010}
Lo, Andrew~W.
\newblock \emph{Hedge Funds: An Analytic Perspective.}
\newblock Princeton University Press, 2010.

\bibitem[Luo \& Tseng(1992)Luo and Tseng]{LuoTseng1992}
Luo, Zhi-Quan and Tseng, Paul.
\newblock On the convergence of the coordinate descent method for convex
  differentiable minimization.
\newblock \emph{Journal of Optimization Theory and Applications}, 72\penalty0
  (1):\penalty0 7--35, 1992.

\bibitem[Mohri et~al.(2012)Mohri, Rostamizadeh, and
  Talwalkar]{MohriRostamizadehTalwalkar2012}
Mohri, Mehryar, Rostamizadeh, Afshin, and Talwalkar, Ameet.
\newblock \emph{Foundations of Machine Learning}.
\newblock Adaptive computation and machine learning. {MIT} Press, 2012.

\bibitem[Nesterov(2012)]{Nesterov2012}
Nesterov, Yurii.
\newblock Efficiency of coordinate descent methods on huge-scale optimization
  problems.
\newblock \emph{SIAM Journal on Optimization}, 22\penalty0 (2):\penalty0
  341--362, 2012.

\bibitem[R{\"a}tsch et~al.(2001)R{\"a}tsch, Mika, and
  Warmuth]{RaetschMikaWarmuth2001}
R{\"a}tsch, Gunnar, Mika, Sebastian, and Warmuth, Manfred~K.
\newblock On the convergence of leveraging.
\newblock In \emph{NIPS}, pp.\  487--494, 2001.

\bibitem[Tseng(2001)]{Tseng2001}
Tseng, P.
\newblock Convergence of a block coordinate descent method for
  nondifferentiable minimization.
\newblock \emph{Journal Optimization Theory and Applications}, pp.\  475--494,
  2001.

\bibitem[Wu(2007)]{Wu2007}
Wu, Liuren.
\newblock Statistical arbitrage based on no-arbitrage models, 2007.
\newblock URL \url{http://faculty.baruch.cuny.edu/lwu/papers/StatArb.pdf}.

\end{thebibliography}
\end{document}